\def\BibTeX{{\rm B\kern-.05em{\sc i\kern-.025em b}\kern-.08em
    T\kern-.1667em\lower.7ex\hbox{E}\kern-.125emX}}
\DeclareMathAlphabet{\mathcal}{OMS}{cmsy}{m}{n}
\DeclareMathAlphabet\mathbfcal{OMS}{cmsy}{b}{n}
\newtheorem{theorem}{\textbf{Theorem}}
\newtheorem{definition}{\textbf{Definition}}
\newtheorem{proof}{Proof}
\let\oldnl\nl
\newcommand{\nonl}{\renewcommand{\nl}{\let\nl\oldnl}}
\begin{document}
\title{
\begin{large}
 IEEE Global Communications Conference (GLOBECOM) 2022\\
\end{large}
Securing Federated Learning against Overwhelming Collusive Attackers}
\author{\IEEEauthorblockN{
Priyesh Ranjan,
	Ashish Gupta,
	Federico Corò,
 	and Sajal K.~Das}
  \IEEEauthorblockA{
  Department of Computer Science, Missouri University of Science and Technology, Rolla, USA \\ 
 {\{pr8pf, ashish.gupta, federico.coro, sdas\}@mst.edu}\\}} 
\maketitle

\begin{abstract}
In the era of a data-driven society with the ubiquity of Internet of Things (IoT) devices storing large amounts of data localized at different places, distributed learning has gained a lot of traction, however, assuming independent and identically distributed data (iid) across the devices. While relaxing this assumption that anyway does not hold in reality due to the heterogeneous nature of devices, federated learning (FL) has emerged as a privacy-preserving solution to train a collaborative model over non-iid data distributed across a massive number of devices. However, the appearance of malicious devices (attackers), who intend to corrupt the FL model, is inevitable due to unrestricted participation. In this work, we aim to identify such attackers and mitigate their impact on the model, essentially under a setting of bidirectional label flipping attacks with collusion. We propose two graph theoretic algorithms, based on Minimum Spanning Tree and $k$-Densest graph, by leveraging correlations between local models. Our FL model can nullify the influence of attackers even when they are up to 70\% of all the clients whereas prior works could not afford more than 50\% of clients as attackers.  The effectiveness of our algorithms is ascertained through experiments on two benchmark datasets, namely MNIST and Fashion-MNIST, with overwhelming attackers. We establish the superiority of our algorithms over the existing ones using accuracy, attack success rate, and early detection round.
\end{abstract}
\begin{keywords}
Attackers, federated learning, label flipping
\end{keywords}

\section{Introduction}
The proliferation of smartphones and IoT devices with significant computing capabilities has led to a steep growth in the adoption of machine learning techniques in our daily routine. 
These devices generate a large amount of data, traditionally processed at a remote server, causing a waste of bandwidth and exposing the privacy of the users as the data may include sensitive information. To address these issues, Google researchers came up with a distributed learning paradigm, called Federated Learning (FL)~\cite{mcmahan2017communication}, in which multiple devices (or clients) can collaborate to produce an accurate and generalized model, usually in the presence of a remote server while keeping the data private.  
The concept involves training a model using the individual data fragments of local devices, followed by passing that model (essentially weights/parameters) to the server for aggregation. Thereafter, the updated model, also referred to as the global model, is relayed back to the clients which marks the end of a single round of the FL process. 

However, the lack of transparency invites adversaries who may pose as participants with the intention of corrupting the process by supplying poisoned local models to the server. It not only reduces the performance of the global model but also influences its convergence. 
Such an adversarial attack can have multi-fold objectives that include promoting the outcome of a particular class by flipping the labels of the corresponding class~\cite{259745,steinhardt2017certified}, poisoning the data by adding backdoor elements~\cite{xie2021crfl} in their local data shards, or providing random parameters (i.e., model replacement) thus diverging the model from optimal solution~\cite{blanchard2017machine,bagdasaryan2020backdoor}. Figure~\ref{fig:char} illustrates an example scenario for the digit classification task using images of handwritten digits where two of the participating clients try to inject corruption by flipping the labels of images from $0$ to $1$ and vice-versa. In this case, the server needs to employ an appropriate attacker detection algorithm to identify these colluding attackers during the aggregation process. Once detected, their models may be excluded from the aggregation to neutralize the impact of the attackers.

\begin{figure}[t]
    \centering
    \includegraphics[scale=.32]{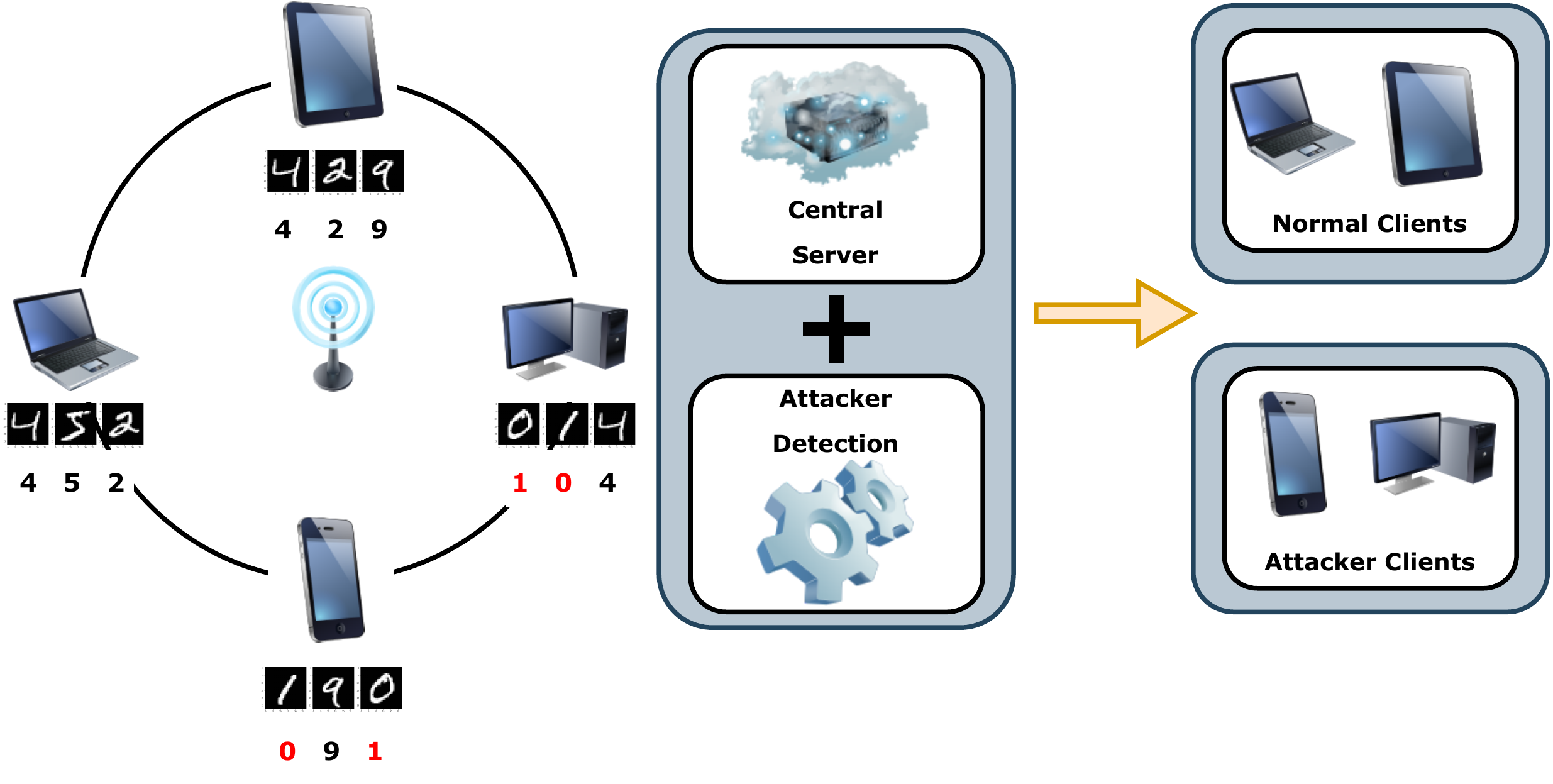}   
    \vspace{-0.05in}
    \caption{Illustrating FL setup with label flipping attacks on the local data shards by changing label $0$ to $1$ and vice-versa. The attacker detection enables server to distinguish normal client from attackers.}
    \label{fig:char}
    \vspace{-0.25in}
\end{figure}
    
In recent years, FL has spurred an active research stream to defend against possible attacks, which can be broadly categorized into two types: (i) untargeted -- adversary aims to influence the convergence of global model by corrupting the whole local model~\cite{blanchard2017machine,wu2020federated,xie2019zeno}; some attacks in this category include random noise addition to local model/gradients, sign flipping, or model replacement, and (ii) targeted -- adversary attempts to misclassify specific set of samples (mostly belong to one particular class) while minimally affecting the model performance on other classes~\cite{259745,steinhardt2017certified,bhagoji2019analyzing, bagdasaryan2020backdoor, xie2021crfl}; in this category, the common attacks are label flipping and backdoor. Further, some researchers have focused on robust aggregation methods such as Krum and Multi-Krum~\cite{blanchard2017machine}, median and trimmed median~\cite{yin2018byzantine}, and GoeMed ~\cite{chen2017distributed}.

\noindent $\bullet$ \textbf{Motivation:} Our work is motivated by the following limitations of existing works. (i) The robust aggregation methods~\cite{blanchard2017machine,yin2018byzantine, chen2017distributed,wu2020towards} mostly extended the stochastic gradient descent (SGD) to aggregate the local models while assuming independent and identically distributed (IID) data across the clients, however, in the FL, the heterogeneous nature of devices produces non-IID data. Moreover, these methods can only minimize the adverse effect on the global model but do not mitigate the effect fully. On the flip side, our work primarily focuses on \textit{complete mitigation of the attackers' impact} by excluding their models from aggregation. (ii) The existing methods that detect targeted attacks can work accurately only when the number of attackers is less than the number of normal clients. Though FoolsGold~\cite{259745} has overcome this limitation, it requires many FL rounds, thus delaying the detection process, and meanwhile, the attackers keep injecting the corruption. 

In this work, we address the problem: {\em how to secure FL against collusive attackers posing label flipping attacks?} To solve this, we propose two graph theoretic algorithms exploiting Maximum Spanning Tree (MST) and $k$-Densest graph problems. Particularly, we make the following contributions:
    \begin{itemize}
    \item We propose two novel attacker detection algorithms, called MST-AD and Density-AD, by leveraging the correlation computed over the gradients \footnote{The terms ``gradients" and ``weight updates" are used interchangeably.} of the clients. Since collusive attackers have a common objective, their models are highly correlated and have the potential to reveal their presence through MST and $k$-densest graph. 
        \item By incorporating MST-AD and Density-AD in the aggregation, we enable the server to identify the poisoned local models and exclude them.
        \item We experimentally evaluate the effectiveness of the proposed algorithms on two benchmark image classification datasets with evidence of their superiority over three different existing algorithms.
        \end{itemize}

The rest of the paper is organized as follows. Section~\ref{sec3} describes our FL setup along with the considered threat model. Section~\ref{sec4} proposes the attacker detection algorithms while Section~\ref{sec5} evaluates these algorithms on two benchmark datasets. Finally, the paper is concluded in Section~\ref{sec6}.

\section{Problem Description}\label{sec3}
\label{sec:problem formulation}

We consider a standard FL setup with a central server and $\mathcal{C}$ clients of which $\mathcal{M}$ clients are attackers (i.e., malicious in nature). Each client $c_i$ possesses a local training data shard $D_i = \{\mathbf{X}, \mathbf{y}\}$ where $\mathbf{X}$ denotes the set of training samples with labels $\mathbf{y}$. For a classification task, the server initializes a global model $W^t$ for round $t=1$ and dispatches it to all the clients who retrain this model on their local data. Let $\delta_i^t$ be the gradients (weights update, i.e., $W^t - w_i^t$) obtained by client $c_i$, which is sent back to the server in round $t$. To this end, the server does aggregation as 
\begin{align} \textstyle
   W^{t+1} = W^{t} + \sum_{i=1}^{\mathcal{C}-\mathcal{M}} p_i \delta_i^t + \sum_{j=1}^{\mathcal{M}} p_j \delta_j^t,  
\end{align}
where $p_i$ is the weight of client $c_i$ computed over the percentage of data samples the client possesses, and $\sum_i p_i = 1$.

\noindent \textbf{Objective:} To mitigate the effect of attackers on global model, the term $\sum_{j=1}^{\mathcal{M}}p_j\delta_j^t$ should be nullified. To achieve this, we aim to correctly identify all $\mathcal{M}$ attackers by leveraging the correlation between $\delta_i$ and $\delta_j$ for each pair of clients and $i \neq j$. 

\textit{Assumptions:} Our FL setup assumes on following:
(i) data across participants follow non-IID distribution;
(ii) no client has access to the local model of others;
(iii) client has no control over the aggregation algorithm;
(iv) the number of attackers is at least $2$ to realize collusion case.

\noindent \textbf{Threat model:} Our threat model is limited to targeted attacks done by compromised devices. Particularly, we focus on the label flipping attacks posed by the colluding devices even when they overwhelm normal clients. Attackers manually change a particular label (say `A') to another label (say `B') and vice-versa, in their local datasets before training the local model. Prior works~\cite{tolpegin2020data,259745} have also demonstrated that the colluding devices can promote the poisoning effect rapidly. 

\textit{Attacker Capabilities:} 
The adversary has full control of the compromised devices, however, can not access the local data or model of the benign devices. 

\section{Proposed algorithms}\label{sec4}
In this work, we propose two attacker detection algorithms, MST-AD and Density-AD, essentially named after the underlying graph theory concepts, by leveraging the correlation between the clients' gradients.  
Since the attackers train on a poisoned dataset (with flipped labels), they should stay closer to each other, i.e., their gradients should show higher similarity in some spaces. 
Through extensive experiments, we found that the correlation between the clients' gradients can effectively separate collusive attackers from normal clients.     
We define the correlation between any two clients $i$ and $j$ as
\begin{equation}\textstyle
    r_{ij} = \frac{\sum_{d} (\delta_{i} - \bar{\delta}) (\delta_{j} - \bar{\delta})}{\sqrt{\sum_{d} (\delta_{i} - \bar{\delta})^2 \  \sum_{d}(\delta_{j} - \bar{\delta})^2}}, \label{eq:corr}
\end{equation}
where $\bar{\delta}$ represents the mean gradients and $d$ is the dimension of the gradients' matrix (the length of the weight vector). 

\begin{figure}[h]
    \centering
    \includegraphics[scale=.65]{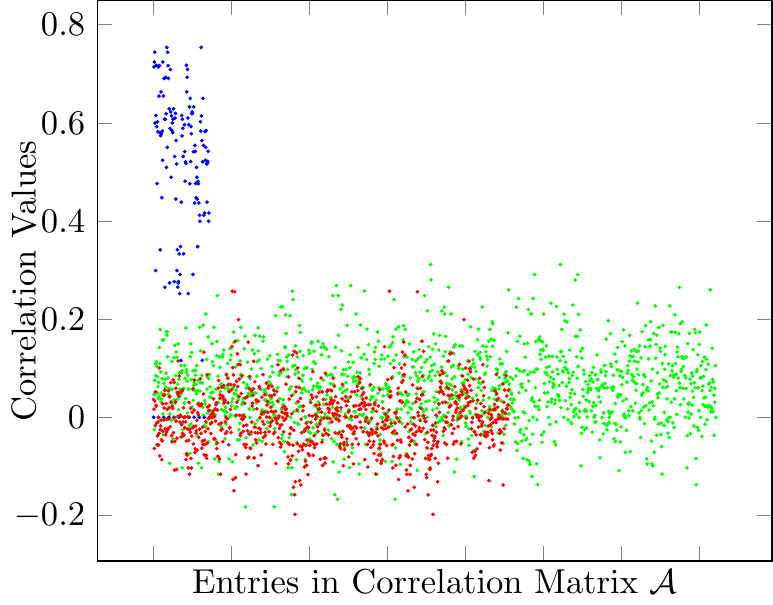}
    \caption{Correlation values between Attacker-Attacker (blue), Normal-Normal (green) and Attacker-Normal (red).}
    \label{fig:corr}
    \vspace{-0.1in}
\end{figure}

\noindent \textbf{An empirical observation:} 
By considering an FL setup with $50$ clients out of which $25\%$ are attackers, we experiment on Fashion-MNIST~\cite{xiao2017fashion} dataset with non-IID data, and the correlations between the clients are shown in Fig.~\ref{fig:corr}.
It is easy to see that the correlation between the attackers is always greater than that between two normal clients, which in turn is greater than the correlation between an attacker and a normal client. Though the above statement does not hold for every single correlation, it suffices to distinguish attackers from normal clients using correlation values. 

To this end, to design our algorithms we make the following \textit{assumption about the correlation} -- given a set of clients $\mathcal{C}$ and a set of attackers $\mathcal{M} \subset \mathcal{C}$, we have the inequality
\begin{align}
    r_{ip} < r_{ij} < r_{pq},
    \label{eq:inequality correlations}
\end{align}
where $p,q \in \mathcal{M}, p \neq q$ and $i,j \in \mathcal{C}\setminus\mathcal{M}, i \neq j$.

By using Eq.~\eqref{eq:corr}, we define a correlation matrix $\mathcal{A} \in \mathbb{R}^{n \times n}$ where an entry $\mathcal{A}_{ij}$ corresponds to the correlation coefficient $r_{ij}$ between the clients $i$ and $j$ with $r_{ii} = 0$.
This lets us create a graph with $n$ vertices corresponding to the $n$ clients participating in FL and the edge weight between a pair of clients $i$ and $j$ corresponds to the entry $\mathcal{A}_{ij}$. The symmetric nature of the matrix makes the graph a complete undirected graph.

Fig.~\ref{figgraph} shows a representative graph for an FL Setup with $6$ clients labelled $\{c_1,c_2,c_3,c_4,c_5,c_6\} \in \mathcal{C}$. 
The clients $c_2$, $c_5$, and $c_6$ are collusive attackers poisoning the model and thus the set of attackers $\mathcal{M} = \{c_2,c_5,c_6\}$. 
The remaining clients belong to the set of normal clients and given as $\mathcal{C}\setminus\mathcal{M} = \{c_1,c_3,c_4\}$. The gradients with clients are provided for representation purposes only and the correlation values between these gradients are shown as edge weights.
As the clients $c_2$, $c_5$, and $c_6$ are attackers, the edge between these clients has a higher value as compared to the other edges in the graph. Similarly, the edges between the attacker and normal client (e.g., between $c_2$ and $c_1$) have a lower value as compared to the edges between two normal clients. 

\begin{figure}[ht]
	\vspace{-0.12in}
	\centering
	\includegraphics[scale=0.4]{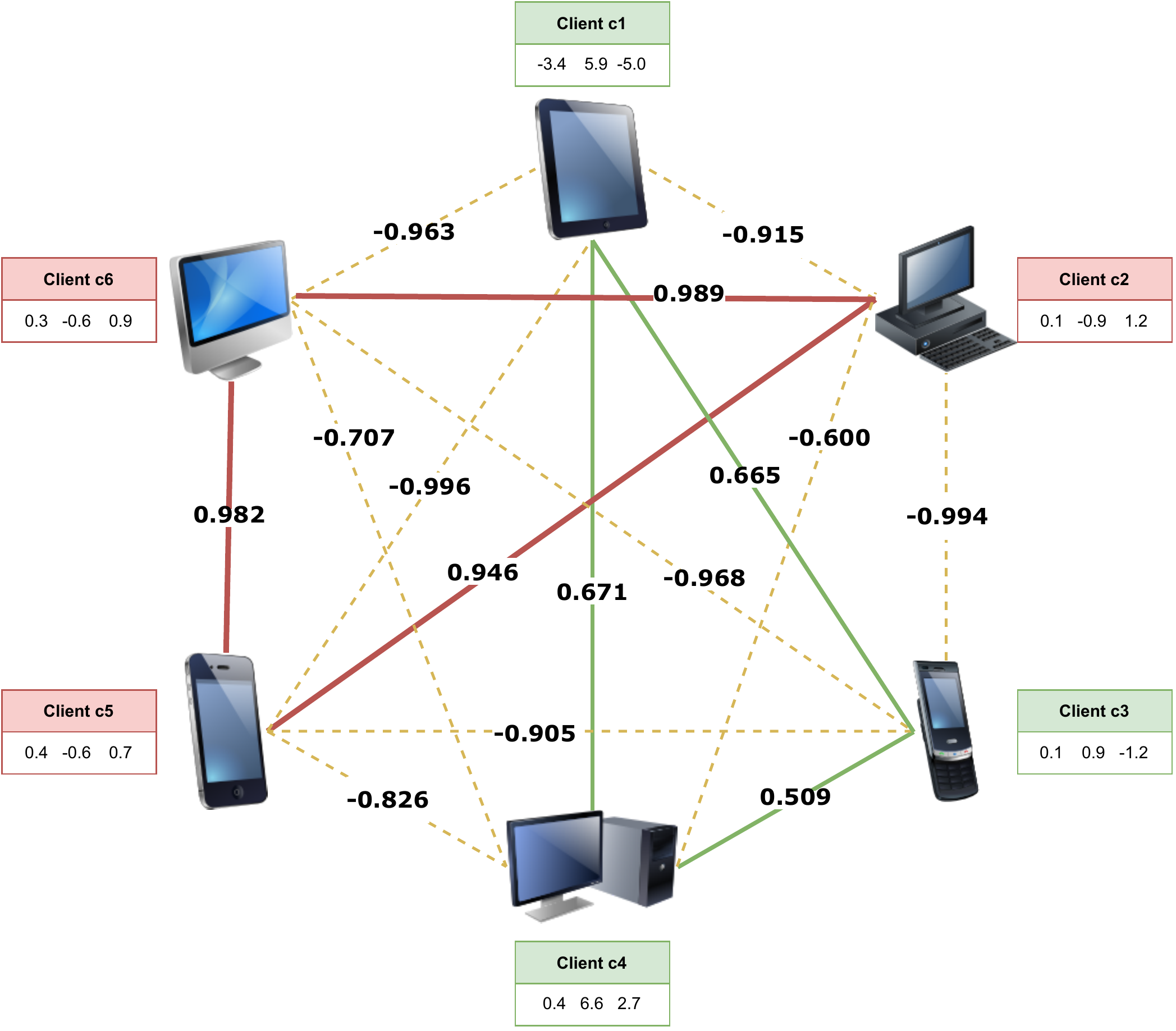}
	\caption{Example graph showing the correlations (weight on edges) between clients. Attacker-Attacker edges (red) with a higher weight, Attacker-Normal edges (dashed yellow) with a lower weight, and Normal-Normal edge (green). Clients $c_2, c_5$, and $c_6$ are the attackers. Numeric values with each client are representing the gradients.
	}
	\label{figgraph}
	\vspace{-0.15in}
\end{figure}

\subsection{MST-AD Algorithm}\label{seca}
In this section, we exploit the graph realization obtained from the correlation matrix to create an MST, similar to our previous work~\cite{ranjan2022leveraging}, which we leverage to distinguish attackers from normal clients. 
We recall that an MST is a spanning tree of a weighted graph having maximum weight, i.e., on a set of $n$ clients, the tree is composed of $n-1$ edges of maximum weight, subject to a standard constraint that the selected edges do not form a cycle.

Specifically, from the set of edges in the graph, an edge $edge$ is chosen and added to the tree $trees$ if $edge$ has the maximum weight among all the remaining edges in the graph and $edge$ does not form a cycle on the edges of $trees$. 
Following this, the $edge$ is discarded from the graph, and the edge with the next highest weight is chosen and the process continues till the MST is created. Upon the creation of the corresponding MST, the edge with the lowest weight is chosen and discarded, which results in two sub-trees; the one with the higher average edge weight corresponds to attackers whose gradients are later excluded from the aggregation to mitigate their impact. 

Algorithm~\ref{alg:3} illustrates the above procedure for creating the sub-trees (essentially MST) over the set of clients. The algorithm starts by initializing the list of trees as an empty set and sorting the edges of the graph in non-decreasing order of their weights (Line~{3}).
We pick the first $n-1$ edges from the sorted set and add them to the tree (Lines~{4--7}).
Considering that our assumption (inequality defined in Eq.~\ref{eq:inequality correlations}) holds, the edges connecting the attackers should be included in the MST. Moreover, there would exist a single Attacker-Normal edge having the lowest weight among all the edges in formed MST. The deletion of such edge results in two sub-trees. 
Since the edges with higher weights exist between the attackers, they would form a single connected sub-tree.
\SetAlFnt{\small}
\begin{algorithm}[ht]
\caption{MST-AD Algorithm}{\label{alg:3}}
\textbf{Input:} Correlation matrix $ \mathcal{A}$ \\
\textbf{Output:} Set of attackers ($Atk$)\\ 
$trees\! \gets\! \emptyset$; $i\!\!\!=\! 0$; $\mathcal E\! \gets\!$ sorted edges in non-increasing order\label{s5}\\
\While{$|trees| < n-1$ \label{s6}}{
    \If{$\mathcal{E}[i]$ { does not form cycle in} $trees$ \label{s7}}{
        $trees \gets trees \cup \mathcal{E}[i] \label{s8}$
    }
    $i$++
}
$subT_1, subT_2 \gets$ Remove lowest weighted edge from $trees$\\
\nonl /* Let avg\_weight($\cdot$) computes average weight of tree */\\
\If{avg\_weight($subT_1$) $>$ avg\_weight($subT_2$) }{
$Atk  \gets subT_1 $ \label{s11}
}
\Else{$Atk \gets subT_2 $}

\textbf{return} $Atk$ 
\end{algorithm}
\vspace{-0.1in}
\begin{theorem}
{\em Given a correlation matrix $\mathcal{A}$, let the inequality (Eq.~\ref{eq:inequality correlations}) 
hold for any pair of clients, then Algorithm~\ref{alg:3} returns the complete and correct set of collusive attackers.} 
\end{theorem}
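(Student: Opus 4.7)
The plan is to trace Algorithm~\ref{alg:3} through three stages corresponding to the three weight classes induced by Eq.~\eqref{eq:inequality correlations}, and then read off the output. First I would fix $m = |\mathcal{M}|$ and partition the edges of the underlying complete graph on $\mathcal{C}$ into three disjoint classes: attacker-attacker ($AA$) edges with both endpoints in $\mathcal{M}$, normal-normal ($NN$) edges with both endpoints in $\mathcal{C}\setminus\mathcal{M}$, and mixed attacker-normal ($AN$) edges. By Eq.~\eqref{eq:inequality correlations}, every $AA$ weight strictly exceeds every $NN$ weight, which strictly exceeds every $AN$ weight. Consequently the non-increasing sort places the three classes into three contiguous blocks and the while-loop examines them in that order.

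Next I would describe what the MST looks like when the while-loop terminates. Restricted to $\mathcal{M}$, the $AA$ edges form a complete subgraph, so Kruskal-style greedy selection on that block alone contributes exactly $m-1$ edges spanning $\mathcal{M}$; the identical argument on the $NN$ block contributes $n-m-1$ edges spanning $\mathcal{C}\setminus\mathcal{M}$. After these two blocks, $trees$ has $n-2$ edges and exactly two connected components corresponding to $\mathcal{M}$ and $\mathcal{C}\setminus\mathcal{M}$. When the algorithm then begins examining the $AN$ block, the very first $AN$ edge merges the two components without forming a cycle, the loop reaches $n-1$ edges and terminates, and every subsequent $AN$ edge is implicitly rejected. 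Hence the MST decomposes canonically as an attacker spanning tree, a normal spanning tree, and a single $AN$ bridge.

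From this structural description the rest is bookkeeping. Since $r_{ip} < r_{ij} < r_{pq}$, the unique minimum-weight edge of the MST is precisely the $AN$ bridge, so Algorithm~\ref{alg:3} removes it and obtains sub-trees $subT_A$ on vertex set $\mathcal{M}$ and $subT_N$ on $\mathcal{C}\setminus\mathcal{M}$. Every edge of $subT_A$ belongs to the $AA$ class while every edge of $subT_N$ belongs to the $NN$ class, so the class separation immediately yields $\mathrm{avg\_weight}(subT_A) > \mathrm{avg\_weight}(subT_N)$, and the algorithm returns exactly $\mathcal{M}$ as $Atk$. Both completeness (no attacker is missed) and correctness (no normal client is flagged) follow from the fact that the partition of vertices across the two sub-trees coincides with the partition $\mathcal{M}$ versus $\mathcal{C}\setminus\mathcal{M}$.

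The main obstacle I anticipate is the degenerate boundary case $n-m = 1$: then $subT_N$ is an isolated vertex with no edges and $\mathrm{avg\_weight}$ is undefined. I would handle this by adopting a convention (for example, treating an empty average as $-\infty$, or comparing the unique edge-bearing component against the singleton) so that $subT_A$ is still selected; the symmetric case $m = 1$ is excluded by the paper's standing assumption $|\mathcal{M}| \geq 2$, and the fully malicious case $m = n$ is vacuous for the detection task. Apart from this edge case the argument is purely structural and requires no numerical estimates beyond the strict separation of the three weight classes.
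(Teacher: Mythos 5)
Your proof is correct and follows essentially the same route as the paper's: partition the edges into the three contiguous weight classes, observe that the greedy construction yields an attacker spanning tree, a normal-client spanning tree, and a single attacker--normal bridge of minimum weight, and conclude by removing that bridge. Your version is in fact somewhat more complete than the paper's, since you explicitly verify the final average-weight comparison and flag the degenerate single-normal-client case, both of which the paper leaves implicit.
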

\begin{proof}
Given the inequality $r_{ip} < r_{ij} < r_{pq}$ for any pair of attackers $p,q \in \mathcal{M}$, and any pair of normal clients $i,j \in \mathcal{C}\setminus\mathcal{M}$.
Let $\mathcal{E}$ be the set of ordered edges of the graph induced by $\mathcal{A}$.
Indeed, $\mathcal{E}$ is divided into three contiguous subgroups, the group of edges between any pair of attackers, followed by the group of edges between any pair of normal clients, followed by the last group formed by the edges between normal and attacker.
Then the proof follows directly from the construction of the MST.
In fact, by definition of MST, we have to select $n-1$ edges from $\mathcal{E}$ starting from the edges with maximum weights.
Following the MST construction, we will have a sub-tree composed of all the malicious clients (edges with higher weight), one sub-tree composed of all the normal clients (the second group of edges in $\mathcal E$), and, finally, one single edge (the one with lower weight in such a tree) between the two sub-trees.
Thus, by removing the edge with the lowest weight, we are able to distinguish between normal and attacker clients.
\end{proof}

\subsection{Density-AD Algorithm}\label{secb}
This section introduces another detection algorithm by leveraging the concept of $k$-densest graph which essentially is a maximum density sub-graph with exactly $k$ vertices. 
Assuming the inequality defined in Eq.~\ref{eq:inequality correlations} holds, our problem can be realized as a $k$-densest sub-graph problem in which the objective is to find the $k$ vertices with the highest average weighted degree.
Note that the value of $k$ in our problem (i.e., the number of attackers) is not known in advance. 
Instead, we aim to find out the $k$ vertices whose removal maximizes the density of the remaining sub-graph. The density of a graph is defined as the average of all weighted degrees of the vertices. Given the correlation matrix $\mathcal{A}$, the graph density can be formally defined as 
\begin{align}
   density(\mathcal{A}) = \frac{2\sum_{i=1}^n \sum_{j=1}^n r_{ij}}{n(n-1)}.
\end{align}

\begin{definition}[\textbf{Sparse vertex}]
{\em A vertex $v$ of a graph $\mathcal{G}$ is sparse if its removal increases the density of the graph $\mathcal{G}-v$.}  
\end{definition}

\medskip
Our algorithm iterates over each vertex of the graph to identify whether it is a {\em sparse} vertex or not. If a vertex is sparse, it is removed permanently from the graph otherwise it is replaced back in that graph. Once all the vertices are traversed through successive iterations, the remaining sub-graph with $k$ vertices corresponds to the potential attackers because they have the highest correlations among themselves.

\SetAlFnt{\small}
\begin{algorithm}[ht]
\caption{Density-AD Algorithm}{\label{alg:kd}}
\textbf{Input:} Correlation matrix $ \mathcal{A}$ \\
\textbf{Output:} Set of attackers ($Atk$)\\ 
$sparse\_list \gets \emptyset$ \\

\While{$i = n \text{ down to } 1$ }{ \label{r4}

        $\mathcal{B} \gets \mathcal{A} \setminus \mathcal{A}[i]$ \label{r5}\\
        \If{$density(\mathcal{B}) > density(\mathcal{A})$ \label{r6}}{$sparse\_list \gets sparse\_list\cup \mathcal{A}[i]$\label{r7}\\
        $\mathcal{A} \gets \mathcal{A} \setminus \mathcal{A}[i]$} \label{r8}

}
\If{$density(sparse\_list) > density(\mathcal{A})$}{$Atk \gets sparse\_list$ \label{r9}}
\Else
{$Atk \gets \mathcal{A}$ \label{r10}}

\textbf{return} $Atk$ 
\end{algorithm}

The overall steps of the proposed Density-AD algorithm are reported in Algorithm~\ref{alg:kd}.
First the list $sparse\_list$ is created for storing the attackers detected during each iteration.
The loop at Line~\ref{r4} iterates $n$ times to testify the sparse nature of each vertex.   
The set $\mathcal{B}$ is temporarily used to store the elements of set $\mathcal{A}$ (Line~\ref{r5}) excluding the $i^{th}$ vertex. 
If the $i^{th}$ vertex is a sparse vertex in the graph, the corresponding density of $\mathcal{B}$ will be higher than the density of $\mathcal{A}$ (Line~\ref{r6}) and $i^{th}$ vertex is then appended to the list $sparse$ and subsequently removed from the set $\mathcal{A}$ (Line~\ref{r8}). 
Next, the density of the nodes in the $sparse$ list is compared with the density of the remaining nodes in $\mathcal{A}$ (Line~\ref{r9}-~\ref{r10}), and the set with a higher density is marked as the set containing the colluding attackers.

\section{Experimental Evaluation}\label{sec5}
In this section, we evaluate the effectiveness of our proposed attacker detection algorithms and analyze the obtained results with a critical comparison with popular existing algorithms.

\subsection{Experimental Setup}
We consider the task of image classification using deep neural networks consisting of $2$ Convolutional Neural Network~\cite{lecun1989backpropagation} layers followed by $3$ fully-connected layers. 
We use two benchmark datasets:  MNIST~\cite{lecun1998gradient} and Fashion-MNIST (FMNIST in short)~\cite{xiao2017fashion}, each comprising of $60000$ training and $10000$ testing greyscale images divided equally in $10$ classes. 
For each dataset, the samples were randomly partitioned into $n = 50$ disjoint subsets and each of that is assigned to a single client. Inspired by~\cite{bagdasaryan2020backdoor}, we adopt Dirichlet distribution with parameter $\alpha = 0.9$, for the partitioning. We simulate the attacker as follows: for the MNIST dataset, the labels of all images with `0' and `1' are flipped and for the FMNIST dataset, labels of all images of ``T-Shirt" and ``Trouser" are flipped. 
Note that the adopted label flipping is bi-directional. Further, inspired by~\cite{259745}, we abbreviate the attack scenarios as $A{-}m$ attacks where $m$ is the percentage of attackers to the total number of clients and $A$ is the shorthand for the term `attack'.
For instance, an $A\!-\!5$ attack would refer to the scenario with $5\%$ of the total clients as collusive attackers.

\subsection{Performance Metrics}

The evaluation is carried out on the test data while comparing our algorithms with competitive detection algorithms. We consider FoolsGold~\cite{259745} and GeoMed~\cite{yin2018byzantine} algorithms for comparison. Besides, our experiments also included federated averaging (FedAvg)~\cite{mcmahan2017communication} as the baseline. We employ the following metrics to quantify the performance:
 (i) \textbf{Test accuracy}, the proportion of correctly classified samples in the test set;
  (ii) \textbf{Attack Success Rate (ASR)}, the proportion of the targeted samples incorrectly classified in the test set. In the context of label flipping, the value corresponds to the ratio of the number of misclassified flipped labels to the total number of labels flipped by  adversaries~\cite{259745}. (iii) \textbf{ED}, the earliest round at which all the attackers got detected correctly.

\subsection{Results}
While reporting the experimental results in this section, we mainly attempt to answer the following questions: (i) How does the training loss decrease over 30 FL rounds? (ii) What is the impact of colluding attackers on the test accuracy and F1 Score of all the algorithms? (iii) How efficiently and early do the proposed algorithms detect the label-flipping attackers? 

\subsubsection{Training loss over FL rounds} \label{1}
Figs.~\ref{l1} and~\ref{l2} report the obtained loss on the training data over the communication rounds on both the datasets for $A{-}5$ and $A{-}70$ attacks, respectively. 
The effects of collusion can be seen with the algorithms employing central measures (FedAvg and GeoMed) showing higher loss as compared to MST-AD and Density-AD. 
While the losses for all the algorithms converge in $A{-}5$ attack, it diverges a lot for the existing algorithms when the attackers overwhelm the normal clients, i.e., in the case of $A{-}70$ attack. 
As the number of attackers increases, the existing algorithms could not detect and eliminate the effect of the attackers, causing a large loss as illustrated in Fig.~\ref{l2}. It is interesting to notice that the performance of the existing attacker detection algorithms, i.e., FoolsGold and GeoMed, turn out to be worse than the baseline FedAvg algorithm. This is mainly caused due to incorrect elimination of the normal clients (detected wrongly as attackers) from the aggregation thus indirectly strengthening the collusion attack. 
This is evident in the case of the FoolsGold algorithm for the MNIST dataset where a steep jump in the loss appears after initial rounds of training. 

In Fig.~\ref{l2}, the proposed algorithms show comparable loss to the existing algorithms for the initial rounds, but the loss decreases sharply afterward, which can indeed be verified by the earliest round of detection (ED) reported in Table~\ref{tab1}.

  \begin{figure}[h]
  \vspace{-0.1in}
    \centering
    \minipage{0.235\textwidth}
    \includegraphics[width=1\linewidth]{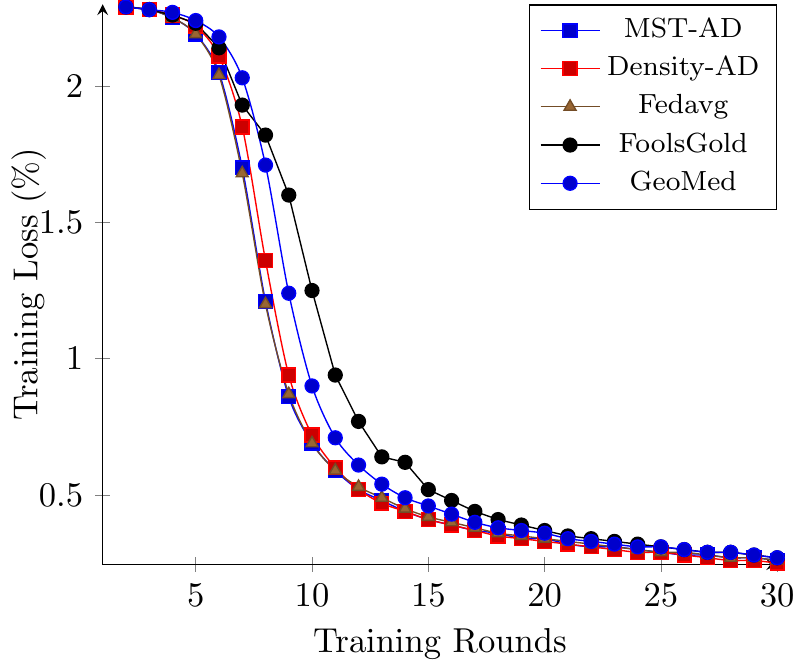}
    \subcaption{\footnotesize{Training Loss for MNIST}}
\endminipage\hfill
\minipage{0.235\textwidth}
\centering
    \includegraphics[width=1\linewidth]{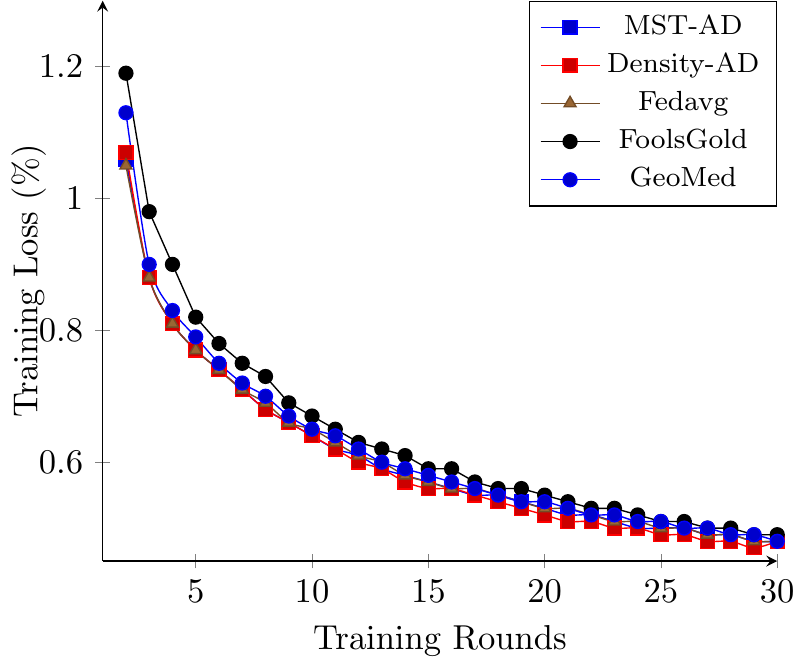}
    \subcaption{\footnotesize{Training Loss for FMNIST}}
\endminipage\hfill 
\caption{Training loss over FL rounds for $A{-}5$ attack.}
    \label{l1}
    \vspace{-0.1in}
\end{figure}

  \begin{figure}[h]
    \vspace{-0.2in}
    \centering
    \minipage{0.235\textwidth}
    \includegraphics[width=1\linewidth]{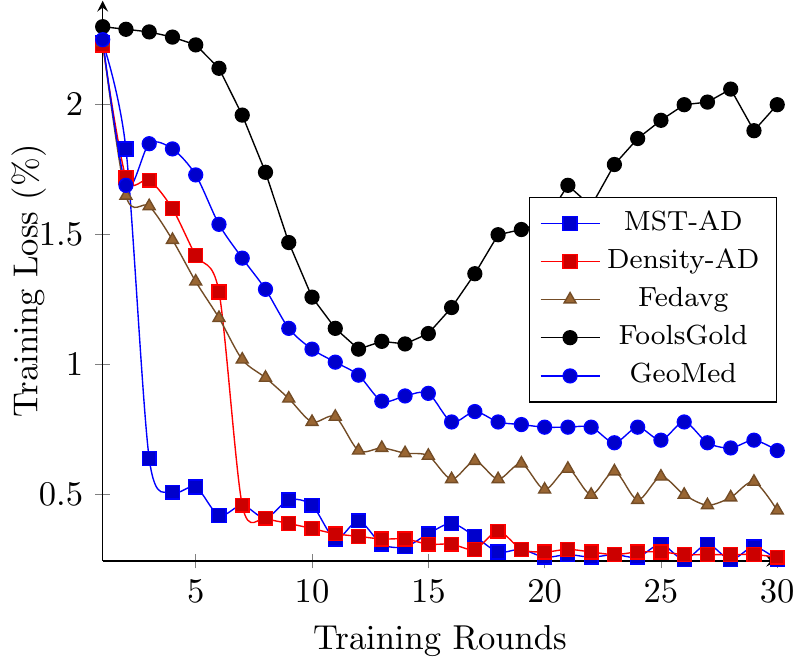}
    \subcaption{\footnotesize{Training Loss for MNIST}}
\endminipage\hfill
\minipage{0.235\textwidth}
\centering
    \includegraphics[width=1\linewidth]{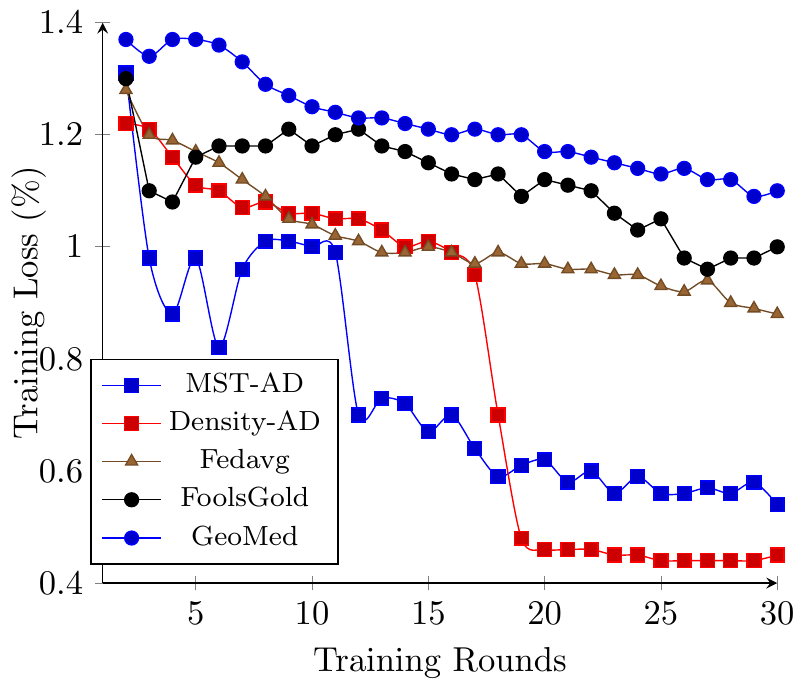}
    \subcaption{\footnotesize{Training Loss for FMNIST}}
\endminipage\hfill     
    \caption{Training loss over FL rounds for $A{-}70$ attack.}
    \label{l2}
    \vspace{-0.1in}
\end{figure}

\subsubsection{Impact of colluding attackers on test accuracy}\label{2}
Next, we report the results on test data, for both datasets, with a varying number of attackers in Fig.~\ref{fig:accuracyf1}.
It is clear that the proposed algorithms can maintain consistent performance with a larger number of attackers, however, the existing algorithms employing central measures like mean and median suffer from performance loss especially when the attackers overwhelm the normal clients.
This can be attributed to the compared algorithms incorrectly classifying normal clients as attackers and excluding them from the aggregation process, thereby degrading the classification accuracy.

Similar observations can be made from the F1 Score for the algorithms reported in parts (b) and (d) of Fig.~\ref{fig:accuracyf1}. Since the proposed algorithms are able to detect the full set of attackers, the obtained F1 Score values do not show much drop even when the number of attackers is more than $50\%$.

  \begin{figure}[h]
    \centering
    \minipage{0.235\textwidth}
    \includegraphics[width=1\linewidth]{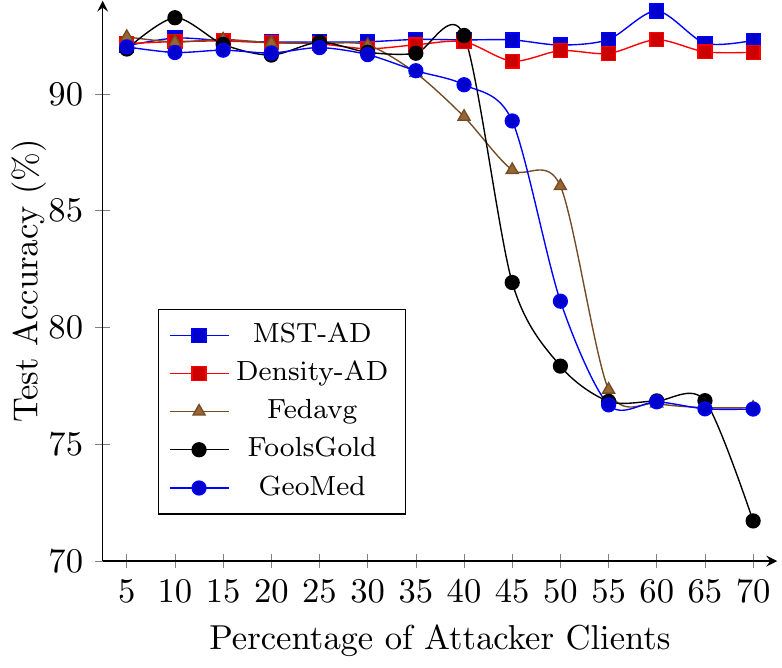}
    \subcaption{\footnotesize{Test Accuracy for MNIST}}
\endminipage\hfill
\minipage{0.235\textwidth}
\centering
    \includegraphics[width=1\linewidth]{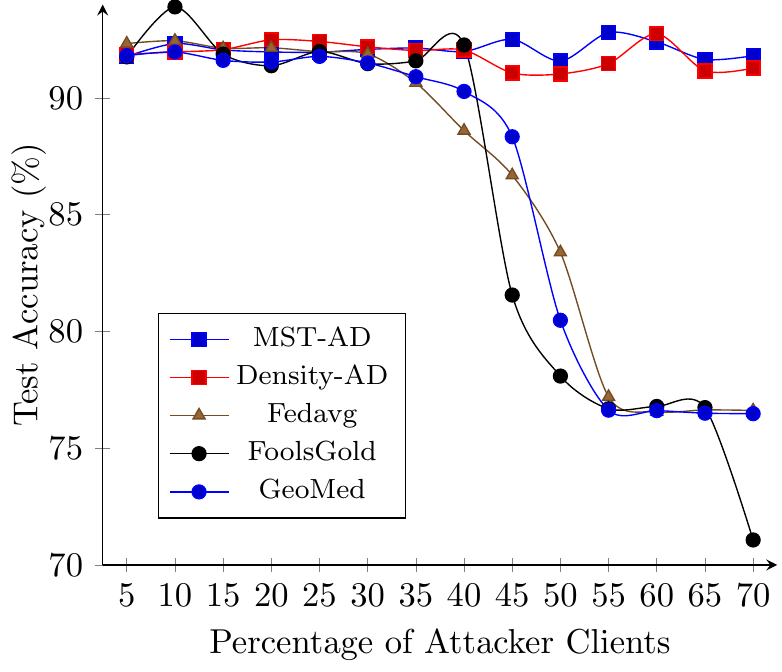}
    \subcaption{\footnotesize{Test F1 Score for MNIST}}
\endminipage\hfill  
    \centering
    \minipage{0.235\textwidth}
    \includegraphics[width=1\linewidth]{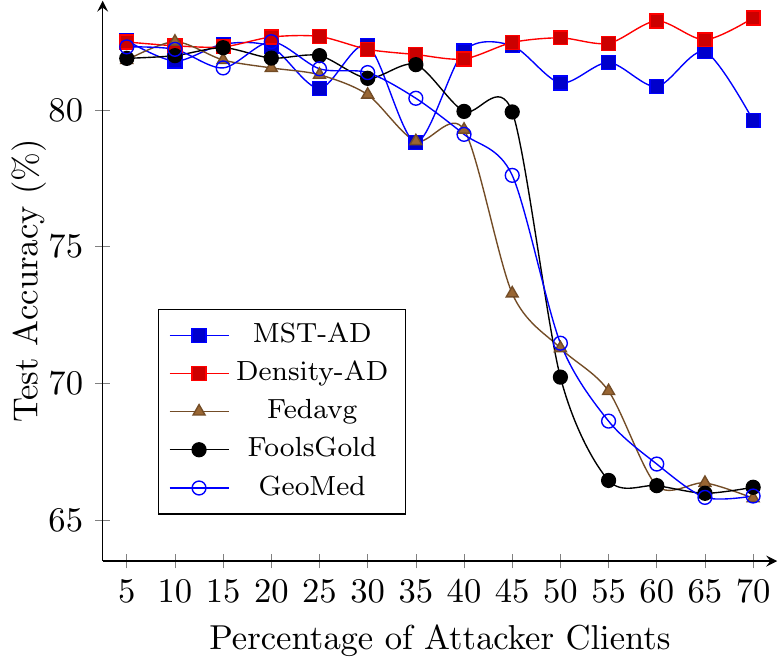}
    \subcaption{\footnotesize{Test Accuracy for FMNIST}}
\endminipage\hfill
\minipage{0.235\textwidth}
\centering
    \includegraphics[width=1\linewidth]{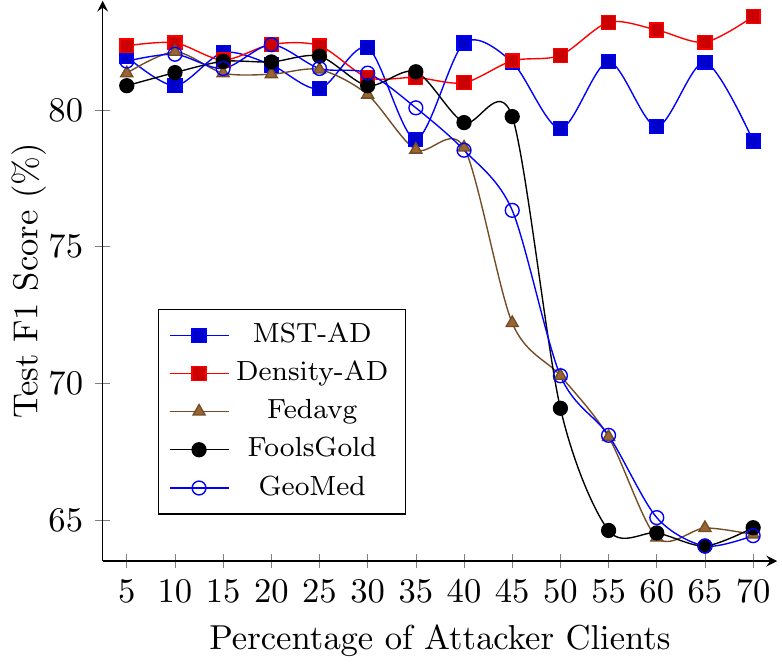}
    \subcaption{\footnotesize{Test F1 Score for FMNIST}}
\endminipage\hfill  
\caption{Accuracy and F1 Score results under varying attack scenarios.}
    \label{fig:accuracyf1}
\end{figure}

\subsubsection{Analyzing ASR}\label{analyzing_asr}
The ASR and the earliest detection round (ED) of all the attackers are presented in Tables~\ref{tab2} and~\ref{tab1} for MNIST and FMNIST datasets, respectively. It is easy to observe that the proposed algorithms are able to maintain a lower ASR even when the proportion of attackers rises.
As the proposed algorithms are able to eliminate the effects of the colluding workers, the successful number of attacks remains lower while the number of samples targeted by the attackers' increases, thus yielding a decrease in ASR. This is in contrast with the existing algorithms yielding a higher ASR following an increase in the number of colluding attackers.
Among the considered existing algorithms, FoolsGold can successfully detect all the attackers only when their percentage is low however it fails in the majority of the cases and thus most entries in ED are marked by $*$.
The proposed algorithms, on the other hand, are able to consistently identify the full set of attackers which is also reflected by their ASR. 

\begin{table}[htb]
\caption{ASR and ED with varying attack scenarios for MNIST. We do not report ED for FedAvg and GeoMed algorithms as they do not focus on detection. [\textbf{ASR} -- lower is better. \textbf{ED} -- $*$ means the algorithm could not detect full set of attackers up to $30$ rounds]}
\label{tab2}
\centering
\resizebox{\columnwidth}{!}{%
\begin{tabular}{|c|cc|cc|cc|c|c|}
\hline
\textbf{Atk} & \multicolumn{2}{c|}{\textbf{MST-AD}} & \multicolumn{2}{c|}{\textbf{Density-AD}}                & \multicolumn{2}{c|}{\textbf{FoolsGold}}           & \textbf{FedAvg} & \textbf{GeoMed} \\ \hline
                 & \multicolumn{1}{c|}{\textbf{ASR}} & \textbf{ED} & \multicolumn{1}{c|}{\textbf{ASR}} & \textbf{ED} & \multicolumn{1}{c|}{\textbf{ASR}} & \textbf{ED} & \textbf{ASR}   & \textbf{ASR} \\ \hline
$\mathbf{A{-}10}$    & \multicolumn{1}{c|}{0\%}          & 11    & \multicolumn{1}{c|}{0\%} &    10     & \multicolumn{1}{c|}{0\%}       & $*$             & 0\%      & 0\%     \\ \hline
$\mathbf{A{-}15}$    & \multicolumn{1}{c|}{0\%}          & 9      & \multicolumn{1}{c|}{0\%} &     20       & \multicolumn{1}{c|}{0\%}       & $20$             & 0.3\%   & 0\%        \\ \hline
$\mathbf{A{-}20}$     & \multicolumn{1}{c|}{0\%}        & 9      & \multicolumn{1}{c|}{0\%} &       15     & \multicolumn{1}{c|}{0\%}       & $18$            & 0.25\%      & 0\%       \\ \hline
$\mathbf{A{-}25}$     & \multicolumn{1}{c|}{0\%}          & 10       & \multicolumn{1}{c|}{0\%} &      19     & \multicolumn{1}{c|}{0\%}       & $17$             & 3\%     & 0.2\%      \\ \hline
$\mathbf{A{-}30}$     & \multicolumn{1}{c|}{0\%}          & 11       & \multicolumn{1}{c|}{0\%} &     14     & \multicolumn{1}{c|}{0\%}       & $25$             & 1.67\%     &  0\%      \\ \hline
$\mathbf{A{-}35}$     & \multicolumn{1}{c|}{0\%}          & 9       & \multicolumn{1}{c|}{0\%} &      15     & \multicolumn{1}{c|}{0\%}       & $24$             & 13\%    & 0.86\%       \\ \hline
$\mathbf{A{-}40}$     & \multicolumn{1}{c|}{0\%}        & 10      & \multicolumn{1}{c|}{0\%} &       16    & \multicolumn{1}{c|}{0\%}          & $*$             & 34\%    & 1.5\%       \\ \hline
$\mathbf{A{-}45}$     & \multicolumn{1}{c|}{0\%}          & 11     & \multicolumn{1}{c|}{0\%} &       17      & \multicolumn{1}{c|}{2.5\%}       & $*$             & 19.2\%    & 28.3\%       \\ \hline
$\mathbf{A{-}50}$     & \multicolumn{1}{c|}{0\%}          & 3     & \multicolumn{1}{c|}{0\%} &       18      & \multicolumn{1}{c|}{3.4\%}       & $*$             & 25.1\%    & 25\%       \\ \hline
$\mathbf{A{-}55}$    & \multicolumn{1}{c|}{0\%}          & 4     & \multicolumn{1}{c|}{0\%} &       20      & \multicolumn{1}{c|}{36.7\%}       & $*$             & 60.18\%    & 42.7\%       \\ \hline
$\mathbf{A{-}60}$     & \multicolumn{1}{c|}{0\%}          & 18     & \multicolumn{1}{c|}{0\%} &       15      & \multicolumn{1}{c|}{77.4\%}       & $*$             & 100\%    & 74.41\%       \\ \hline
$\mathbf{A{-}65}$     & \multicolumn{1}{c|}{0\%}          & 11     & \multicolumn{1}{c|}{0\%} &       18      & \multicolumn{1}{c|}{88\%}       & $*$             & 99.7\%    & 78.5\%       \\ \hline
$\mathbf{A{-}70}$     & \multicolumn{1}{c|}{0\%}          & 11     & \multicolumn{1}{c|}{0\%} &       9      & \multicolumn{1}{c|}{100\%}       & $*$             & 96.39\%    & 94.9\%       \\ \hline
\end{tabular}
}
\end{table}

\begin{table}[htb]
\caption{ASR and ED with varying attack scenarios for FMNIST. We do not report ED for FedAvg and GeoMed algorithms as they do not focus on detection. [\textbf{ASR} -- lower is better. \textbf{ED} -- $*$ means the algorithm could not detect full set of attackers upto $30$ rounds]}
\label{tab1}
\centering
\resizebox{\columnwidth}{!}{%
\begin{tabular}{|c|cc|cc|cc|c|c|}
\hline
\textbf{Atk} & \multicolumn{2}{c|}{\textbf{MST-AD}} & \multicolumn{2}{c|}{\textbf{Density-AD}}                & \multicolumn{2}{c|}{\textbf{FoolsGold}}           & \textbf{FedAvg} & \textbf{GeoMed} \\ \hline
                 & \multicolumn{1}{c|}{\textbf{ASR}} & \textbf{ED} & \multicolumn{1}{c|}{\textbf{ASR}} & \textbf{ED} & \multicolumn{1}{c|}{\textbf{ASR}} & \textbf{ED} & \textbf{ASR}   & \textbf{ASR} \\ \hline
$\mathbf{A{-}10}$     & \multicolumn{1}{c|}{3\%}          & 3    & \multicolumn{1}{c|}{3.5\%} &    23     & \multicolumn{1}{c|}{4.52\%}       & $*$             & 2.5\%      & 2.5\%     \\ \hline
$\mathbf{A{-}15}$     & \multicolumn{1}{c|}{2.67\%}          & 7      & \multicolumn{1}{c|}{2\%} &     14       & \multicolumn{1}{c|}{1.64\%}       & $*$             & 3\%   & 2\%        \\ \hline
$\mathbf{A{-}20}$     & \multicolumn{1}{c|}{1.5\%}        & 3      & \multicolumn{1}{c|}{1\%} &       24     & \multicolumn{1}{c|}{1.52\%}       & 22            & 3\%      & 3.25\%       \\ \hline
$\mathbf{A{-}25}$     & \multicolumn{1}{c|}{1.8\%}          & 29       & \multicolumn{1}{c|}{1.4\%} &      16     & \multicolumn{1}{c|}{1.4\%}       & $*$             & 6.6\%     & 1.4\%      \\ \hline
$\mathbf{A{-}30}$     & \multicolumn{1}{c|}{1\%}          & 18       & \multicolumn{1}{c|}{1.67\%} &     17     & \multicolumn{1}{c|}{0.84\%}       & $*$             & 2.3\%     & 2.3 \%      \\ \hline
$\mathbf{A{-}35}$     & \multicolumn{1}{c|}{0.85\%}          & 3       & \multicolumn{1}{c|}{1.08\%} &      21     & \multicolumn{1}{c|}{1.16\%}       & $*$             & 19.8\%    & 1.14\%       \\ \hline
$\mathbf{A{-}40}$     & \multicolumn{1}{c|}{0.5\%}        & 12      & \multicolumn{1}{c|}{1.63\%} &       22    & \multicolumn{1}{c|}{6\%}          & $*$             & 8.5\%    & 4.25\%       \\ \hline
$\mathbf{A{-}45}$     & \multicolumn{1}{c|}{0.67\%}          & 3     & \multicolumn{1}{c|}{0.88\%} &       20      & \multicolumn{1}{c|}{3.78\%}       & $*$             & 82.32\%    & 22.44\%       \\ \hline
$\mathbf{A{-}50}$     & \multicolumn{1}{c|}{1.1\%}          & 3     & \multicolumn{1}{c|}{0.81\%} &       21      & \multicolumn{1}{c|}{43.2\%}       & $*$             & 82.5\%    & 47.45\%       \\ \hline
$\mathbf{A{-}55}$     & \multicolumn{1}{c|}{0.81\%}          & 2     & \multicolumn{1}{c|}{0.6\%} &       15      & \multicolumn{1}{c|}{71.1\%}       & $*$             & 71.27\%    & 51.9\%       \\ \hline
$\mathbf{A{-}60}$     & \multicolumn{1}{c|}{1.83\%}          & 28     & \multicolumn{1}{c|}{0.54\%} &       16      & \multicolumn{1}{c|}{66.15\%}       & $*$             & 67.9\%    & 62.5\%       \\ \hline
$\mathbf{A{-}65}$    & \multicolumn{1}{c|}{0.77\%}          & 2     & \multicolumn{1}{c|}{0.58\%} &       19      & \multicolumn{1}{c|}{70.5\%}       & $*$             & 70.15\%    & 74\%       \\ \hline
$\mathbf{A{-}70}$    & \multicolumn{1}{c|}{1\%}          & 11     & \multicolumn{1}{c|}{0.57\%} &       18      & \multicolumn{1}{c|}{76.3\%}       & $*$             & 71.65\%    & 76.25\% \\ \hline
\end{tabular}
}
\end{table}

\subsubsection{Confusion matrices} Finally, we report the confusion matrices for FoolsGold and Density-AD in case of overwhelming attackers (i.e., for $A{-}70$) in Fig.~\ref{fig:conf}. We can clearly see that FoolsGold could not correctly classify the images of flipped labels (i.e., $0$ and $1$), whereas the proposed algorithms do not show any such confusion between these labels.  

  \begin{figure}[h!]
    \centering
    \centering
    \minipage{0.235\textwidth}
    \includegraphics[width=1\linewidth]{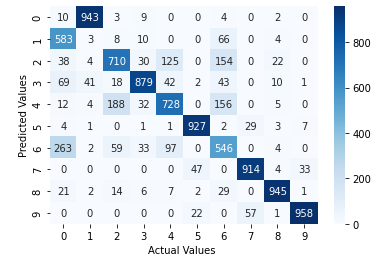}
    \subcaption{\footnotesize{FoolsGold}}
\endminipage\hfill
\minipage{0.235\textwidth}
\centering
    \includegraphics[width=1\linewidth]{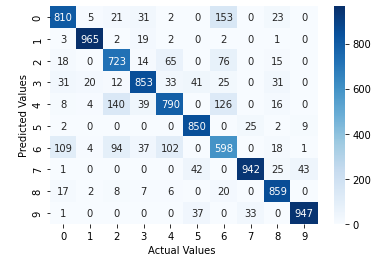}
    \subcaption{\footnotesize{Density-AD}}
\endminipage\hfill  
    \caption{Confusion matrices for FoolsGold and Density-AD algorithms in case of $A{-}70$ on FMNIST dataset.}
    \label{fig:conf}
    \vspace{-0.1in}
\end{figure}

\section{Conclusion}\label{sec6}
In this paper, we attempted to address a critical problem of FL framework which is the presence of colluding attackers. Since the attackers can harm the global model severely, their detection is of utmost need for real deployment of the FL. We proposed two graph-based algorithms, MST-AD and Density-AD, by leveraging gradients' correlation among the clients. By performing an extensive set of experiments, we validated that the proposed algorithms can maintain a low attack success rate even when the attackers overwhelm the normal clients.  

Since the proposed algorithms rely on correlation, they may not be able to detect an adversary if the FL system does not have any other adversary to collude with, which we plan to relax in the future.
In addition, the current versions of proposed algorithms are limited to label-flipping attacks only. We plan to scale our FL setup by including more types of attacks such as byzantine, backdoor, and multi-label flipping.

\bibliographystyle{IEEEtran}
\bibliography{refer}

\end{document}